\newcommand{\bc}{\mathbf{c}}
\newcommand{\bs}{\mathbf{s}}
\newcommand{\bx}{\mathbf{x}}
\newcommand{\by}{\mathbf{y}}
\newcommand{\bz}{\mathbf{z}}
\newcommand{\CD}{\mathcal{D}}
\newcommand{\CO}{\mathcal{O}}
\newcommand{\CX}{\mathcal{X}}
\newcommand{\CZ}{\mathcal{Z}}
\providecommand{\norm}[1]{\left\lVert#1\right\rVert}
\newcommand{\bbR}{\mathbb{R}}
\DeclareMathOperator{\softmax}{softmax}
\DeclareMathOperator{\clip}{clip}
\DeclareMathOperator{\cluster}{cluster}
\DeclareMathOperator{\logits}{logits}
\DeclareMathOperator{\median}{median}
\DeclareMathOperator{\aggregate}{aggregate}
\DeclareMathOperator{\batch}{batch}
\DeclareMathOperator{\embed}{embed}
\DeclareMathOperator{\leftmedian}{left-median}
\DeclareMathOperator{\rightmedian}{right-median}
\DeclareMathOperator{\leftop}{left}
\DeclareMathOperator{\rightop}{right}
\newcommand{\baz}{\bar{z}}
\newcommand{\bbaz}{\bar{\bz}}
\newcommand{\eps}{\varepsilon}
\newtheorem{thm}{Theorem}
\newtheorem{lem}{Lemma}
\newtheorem{defn}{Definition}
\definecolor{lightgray}{gray}{0.9}
\title{Clustering and Median Aggregation Improve \\ Differentially Private Inference\thanks{Authors ordered alphabetically.  Author contributions are listed at the end.}}
\author[1]{Kareem Amin}
\author[2]{Salman Avestimehr}
\author[2]{Sara Babakniya}
\author[1]{Alex Bie}
\author[1]{\authorcr Weiwei Kong}
\author[1]{Natalia Ponomareva}
\author[1]{Umar Syed}
\affil[1]{Google Research, New York}
\affil[2]{University of Southern California, Los Angeles}
\date{}
\begin{document}

\maketitle

\begin{abstract}

Differentially private (DP) language model inference is an approach for generating private synthetic text. A sensitive input example is used to prompt an off-the-shelf large language model (LLM) to produce a similar example. Multiple examples can be aggregated together to formally satisfy the DP guarantee. 

Prior work creates inference batches by sampling sensitive inputs uniformly at random. We show that uniform sampling degrades the quality of privately generated text, especially when the sensitive examples concern heterogeneous topics. 

We remedy this problem by clustering the input data before selecting inference batches. Next, we observe that clustering also leads to more similar next-token predictions across inferences. We use this insight to introduce a new algorithm that aggregates next token statistics by privately computing medians instead of averages. This approach leverages the fact that the median has decreased local sensitivity when next token predictions are similar, allowing us to state a data-dependent and ex-post DP guarantee about the privacy properties of this algorithm. Finally, we demonstrate improvements in terms of representativeness metrics (e.g., MAUVE) as well as downstream task performance. We show that our method produces high-quality synthetic data, at significantly lower privacy cost, than a previous state-of-the-art method.
\end{abstract}

\section{Introduction}

One of the many applications for powerful generative AI models is the creation of synthetic data. A natural approach is to prompt a large language model (LLM) with a rewriting task and a representative example, asking it produce synthetic analogs that resemble the example. This approach is not privacy-preserving if the seed example contains sensitive information that could theoretically pass through into the synthetic outputs. 

This limitation is especially problematic if preserving the privacy of the source data was the reason to generate synthetic data in the first place. Consider a data steward who has access to a collection of medical records. They must preserve the privacy of the patients who provided the records. At the same time, they would like to make a privacy-preserving synthetic version of the data public to improve machine learning methods for making diagnoses.

The literature on \emph{differentially private (DP) inference} \citep{dwork2018privacy,papernot2017semi,papernot2018scalable,wu2024prompt,ginart2022submix,Majmudar2022,duan2023flocks,flemings2024adaptively,flemings2024differentially} provides a means to generate synthetic data \citep{hong2023dp,tang2024privacy,amin2024private,gao2025dataadaptive} by prompting a pre-trained model, while ensuring formal privacy guarantees. At a high-level, DP inference methods work by prompting an off-the-shelf LLM for multiple responses, with each one seeded by a sensitive example belonging to a different user. These responses are then aggregated in some way that satisfies DP. Through this procedure, an aggregated response does not represent any single seed example, but is a noisy amalgamation of all the seed examples. 

\paragraph{Our contributions.}

In this work, we study the quality of synthetic data produced in this manner. In particular, we are interested in how the \emph{heterogeneity} of the seed batch affects the \emph{representativeness} of synthetic data. 

DP requires that an adversary cannot detect any single seed example by observing the aggregated response. Thus, if data is highly heterogeneous, this presents a problem; by design, the aggregated response will not be representative of any seed example. In contrast, if all seed examples are highly self-similar, all responses will be similar, and the aggregated response can be representative of all the seed examples without violating the DP guarantee. 

Armed with this observation we note that all state-of-the-art DP inference methods~\citep{tang2024privacy,amin2024private, gao2025dataadaptive} batch seed examples uniformly at random, tending to generate heterogenous batches. We depart from this approach, demonstrating a practical technique for pre-clustering data while still preserving privacy. We use this clustering to assign similar examples to the same batch, creating more homogenous inputs for the DP inference algorithm. 

Next, we propose a new algorithm for DP inference designed to make better aggregations when the LLM's predictions are aligned. We modify the algorithm of \citet{amin2024private}, which aggregates LLM responses on a per-token basis by averaging token logit scores across inferences. Our algorithm replaces this average with a median. It is well-known in the DP literature that a median operation has local sensitivity that depends on how well-concentrated its inputs are. We prove a formal guarantee that holds in the data-dependent~\citep{papernot2017semi,papernot2018scalable} and ex-post~\citep{ligett2017accuracy} differential privacy setting.

\begin{figure*}[t!] 
    \centering 
    \subfloat[DP Inference with Homogenous Data]{\includegraphics[width=0.62\textwidth]{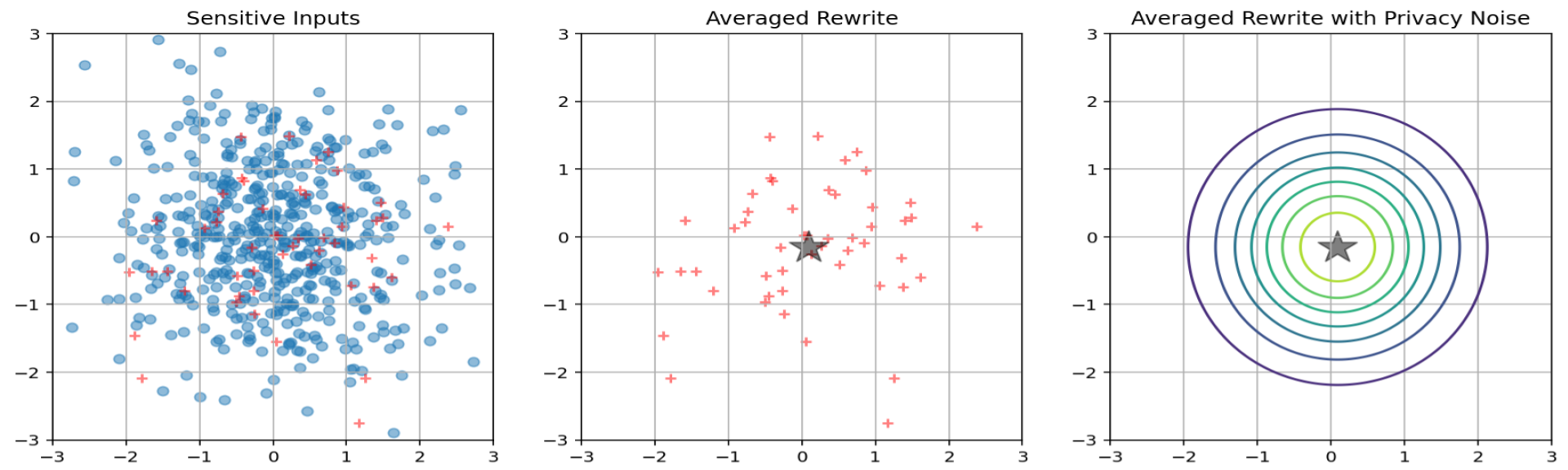}}
    \hfill 
    \subfloat[DP Inference with Heterogenous Data]{\includegraphics[width=0.62\textwidth]{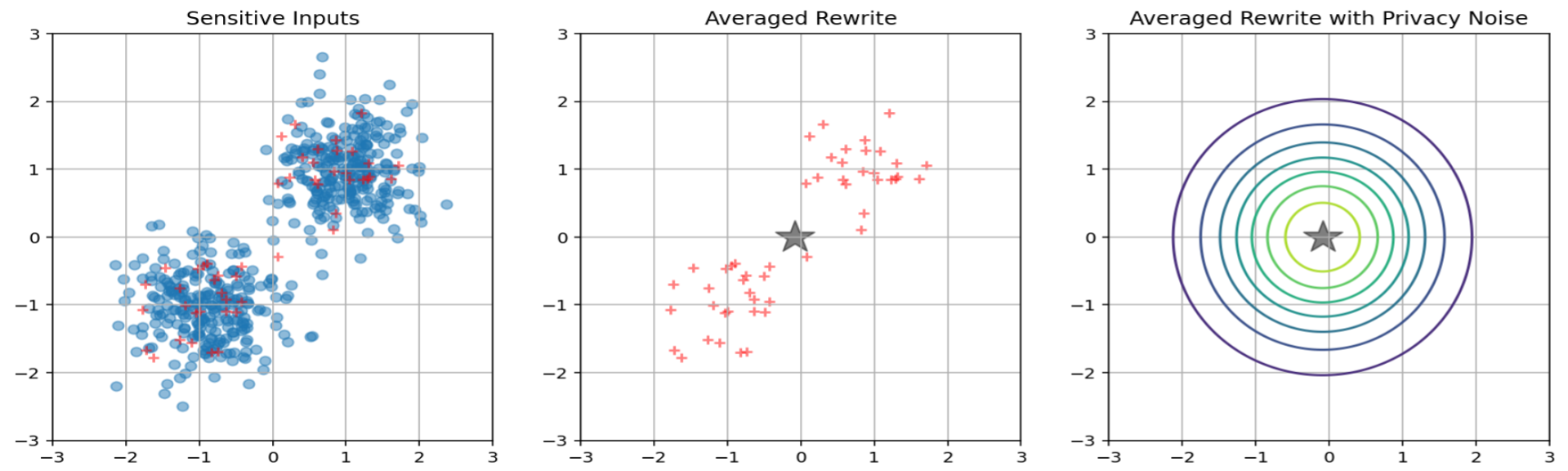}}
    \caption{A stylized depiction of synthetic data generation based on DP inference. \emph{Left (a and b):} An input corpus sits in some embedding space, and is sampled uniformly at random (points in {\color{red} red}) to form a batch. \emph{Middle (a and b):} A depiction of an average rewrite for the batch. \emph{Right (a and b):} A noise distribution centered around the average rewrite. In (b), the distinct semantic clusters found in the input dataset collapse.
    }
    \vspace{-5pt}
    \label{fig:intuition} 
\end{figure*} %

Representativeness metrics like MAUVE~\citep{pillutla2021mauve} have not been previously evaluated for data generated via DP inference. Only recently, \citet{amin2024private} demonstrated generating enough data to begin measuring similarity at a dataset level. However, they report only accuracy measures on downstream tasks. Indeed, we show that their method fails to produce representative data as measured by MAUVE. 

We conduct experiments on a variety of datasets and report improvement on two metrics: MAUVE scores computed on the raw synthetic dataset and accuracy of a BERT model trained on synthetic data. Finally, we incorporate a number of other improvements to further the state-of-the-art MAUVE scores, demonstrating the effect of other design decisions such as prompts, pre-trained vs. instruction-tuned generators, and varying the number of examples used when prompting.

\textbf{To summarize:} In this work, we improve the representativeness of data produced by state-of-the-art methods for private inference. We hypothesize that poor representativeness, as measured by MAUVE score, is due to misalignment of LLM predictions during inference. To correct this, we cluster inputs to achieve better alignment, and therefore more representative data. Our contributions include a study of various clustering techniques, where we identify an effective way to privately cluster data. Finally, we take further advantage of better aligned inputs by aggregating logit scores with a private median, allowing us to state data-dependent, ex-post, differential privacy guarantees at a reduced privacy cost. We perform extensive experiments, isolating each of these contributions on several datasets.  

\section{Limitations of Uniform Batching}
\label{sec:limitations}

As previously discussed, DP inference takes many text inputs (a batch) and attempts to produce a single output representing an \emph{aggregated rewrite} for all of the records in the batch. At this level of abstraction, we can recognize a problem. Batches are ordinarily drawn uniformly at random from the input corpus. Therefore, the aggregated rewrite targeted by the these algorithms will collapse any of the variation within the corpus. Consider the visualization in Figure~\ref{fig:intuition}, where we think of text data sitting in some embedding space, and the average response as a simple average within the embedding space. The distinct semantic clusters in in Figure~\ref{fig:intuition}(b) collapse due to the averaging procedure.

\subsection{Empirical demonstration} 

We can demonstrate this claim by evaluating the performance of a private inference method on a metric that captures the representativeness of the data generated. While the algorithm of~\citet{amin2024private} is known to produce data that performs well on down-stream classification tasks, these results do not tell us whether the synthetic data distribution represents the initial corpus. For that, we use MAUVE \citep{pillutla2021mauve}, a generic comparison measure between text corpora.

\begin{table}[h!]
\small
    \centering
    \begin{tabular}{l l r r}
        \toprule
        \multicolumn{1}{l}{\textbf{Privacy $\boldsymbol{\epsilon}$}} & \multicolumn{1}{l}{\textbf{Method}}                          & \multicolumn{1}{l}{\textbf{MAUVE}}           & \multicolumn{1}{l}{\textbf{Accuracy}}          \\
        \midrule
        \multirow{4}{*}{$\epsilon = \infty$}
        & Real data  &  $.872_{.018}$ & $.965_{.001}$ \\ 
        \cmidrule{2-4}
        & Baseline \citep{amin2024private} &   $.130_{.009}$  & $.892_{.015}$   \\
        \cmidrule{2-4}
        & Baseline++ (w/ pretrained model \& prompt)      &  $.460_{.050}$ & $.898_{.022}$ \\
        & \hspace{0.5em} + \emph{non-private clustering} & $.650_{.021}$ & $.912_{.020}$ \\
        \bottomrule
    \end{tabular}
    \vspace{10pt}
    \caption{\textbf{Clustering improves DP inference results at $\boldsymbol{\epsilon = \infty}$ on Yelp100k using Gemma 2 2B.} We report mean and std of \emph{MAUVE} scores against real data (5 seeds), as well \emph{Accuracy} of a BERT model trained on synthetic data and evaluated on real data (3 seeds). While \cite{amin2024private} enables generation of large synthetic corpora with DP inference, \emph{quantity} begets the question of \emph{representativeness}. \textbf{\emph{Baseline} demonstrates the limits of existing approaches even when privacy is not a concern.} First, we show direct improvements via switching to the pretrained checkpoint and incorporating multiple examples into the prompt (\emph{Baseline++}). On top of these improvements, \textbf{cluster-informed batching leads to improvements in representativeness}. Here, clustering is performed non-privately by running $k$-means on private data, %
    with $K=500$. The remainder of the paper seeks to realize and improve upon these gains under privacy constraints. 
}\label{tab:clustering-demo}
\end{table}

In Table~\ref{tab:clustering-demo} we see that DP inference (c.f. \emph{Baseline}) does not produce representative datasets, even when pushing the methods to their limit by selecting parameters that offer no formal privacy guarantee (the $\epsilon = \infty$ regime). We begin our investigation from an improved baseline (\emph{Baseline++}) obtained by (1) switching from Gemma 2 2B IT to the PT checkpoint (and necessarily changing the prompt); and (2) adding more in-context examples; full details are in Section \ref{subsec:experiment-setup}).

Conceptually, we can remedy the problem of heterogenous batches by first clustering the data, and then constructing batches by uniformly sampling inputs from \emph{within each cluster}. For example, one could alternate between selecting batches from each of the 2 clusters in Figure~\ref{fig:intuition}. In Table~\ref{tab:clustering-demo}, we report the MAUVE score of an algorithm (\emph{non-private clustering}) that aims to do just that. The algorithm computes embeddings of the input corpus with Gecko \citep{lee2024gecko}, and clusters them into 500 clusters using $k$-means. Batches are then constructed by first assigning inputs to clusters and feeding inputs with the same cluster assignment to the algorithm of~\citet{amin2024private}.%

While this procedure significantly improves MAUVE scores, it does not satisfy the DP guarantee. In the remainder of the paper, we describe: (1) how to implement this idea in a privacy-preserving manner; and (2) a new DP inference algorithm that takes advantage of pre-clustered data.

\section{Preliminaries and notation}

Let $\CX$ be the token vocabulary, \emph{i.e.}, the set of all possible tokens. A \emph{token sequence} is an element of $\CX^*$, and a \emph{logit vector} is an element of $\bbR^\CX$ (one logit per token in the vocabulary). For brevity we define $\CZ \equiv \bbR^\CX$ to be the set of all logit vectors. If $\bz \in \CZ$ then $z_x \in \bbR$ denotes the component of $\bz$ corresponding to token $x \in \CX$. If $\bx_1$ and $\bx_2$ are token sequences then we write $\bx_1\bx_2 \in \CX^*$ to denote their concatenation. A \emph{large language model (LLM)} is defined by a function $\logits : \CX^* \rightarrow \CZ$ that maps each token sequence to a logit vector. A \emph{dataset} $D \subseteq \CX^*$ is a subset of token sequences. A pair of sets are \emph{neighbors} if their symmetric difference has size 1, \emph{i.e.}, one set can be formed from the other by adding or subtracting a single element.

\section{Improved algorithm for DP inference}

Algorithm \ref{alg:main} is our method for generating private synthetic text. Given a dataset of sensitive seed texts, the algorithm first partitions the seeds into $m$ batches. For each batch, the algorithm generates a single synthetic example consisting of $n$ tokens. Each synthetic example $\bx$ is generated one token at a time, by first initializing $\bx$ to be the empty token sequence and then repeatedly executing the following procedure: (1) generate $\logits(\bs\bx)$ for each seed $\bs$ in the batch, and aggregate the logit vectors into a single logit vector $\bbaz$; (2) draw token $x$ from $\softmax(\bbaz/\tau)$, the distribution that assigns probability proportional to $\exp(\baz_y/\tau)$ to each token $y$; (3) append $x$ to $\bx$.

\begin{center}
\begin{algorithm}[ht]
\small
\setstretch{1.15}
\caption{\label{alg:main} Generate private synthetic examples}
\begin{algorithmic}[1]
\Statex {\bf Given:} $\logits : \CX^* \rightarrow \CZ$, temperature $\tau > 0$, maximum token sequence length $n > 0$, {\color{Blue} $\batch : \CX^* \rightarrow [m]$}, {\color{Blue} $\aggregate : 2^{\CZ} \rightarrow \CZ$}. 
\Statex {\bf Input:} Dataset of sensitive seeds $D \subseteq \CX^*$.
\Statex {\bf Output:} Dataset of synthetic examples $X \subseteq \CX^*$.
\State $X \gets \emptyset$
\For{each $i = 1, \ldots, m$}
\State $S_i = \{\bs \in D : {\color{Blue} \batch(\bs)} = i\}$.
\State $\bx_{i,0} \gets \textrm{Empty token sequence}$
\For{$t = 1, \ldots, n$}
\State $Z_{i,t} \gets \{\logits(\bs\bx_{i,t-1}) : \bs \in S\}$
\State $\bbaz_{i,t} \gets {\color{Blue} \aggregate(Z_{i,t})}$
\State $x_{i,t} \sim \softmax(\bbaz_{i,t} / \tau)$
\State Append $x_{i,t}$ to $\bx_{i,t-1}$ to form $\bx_{i,t}$
\EndFor
\State $X \gets X \cup \{\bx_{i,n}\}$
\EndFor
\State \textbf{return} $X$
\end{algorithmic}
\end{algorithm}
\end{center}

Algorithm \ref{alg:main} is a generalization of conventional non-private LLM inference, as well as the DP inference method of~\citet{amin2024private}. The differences between the methods are in their implementations of the $\batch()$ and $\aggregate()$ subroutines, which are marked in {\color{Blue} blue} in Algorithm \ref{alg:main}. In conventional inference, $\batch()$ assigns each seed to its own unique batch, and $\aggregate()$ has no effect. In the method from~\citet{amin2024private}, $\batch()$ assigns each seed to one of $m$ batches uniformly at random (typically $m$ is much smaller than the number of seeds), and $\aggregate()$ is defined
\[
\aggregate(Z) = \frac1{|Z|} \sum_{\bz \in Z} \clip_c(\bz)
\]
where $\clip_c(\bz)_i = \max\{-c, z_i - \max_j \{z_j\} + c\}$. In other words, $\aggregate()$ shifts and clips each logit value so that it lies in the interval $[-c, c]$, and then averages the clipped logit vectors together.
Clipping is key to proving a privacy guarantee, which is based on the observation that the token sampling procedure is equivalent to the exponential mechanism \citep{mcsherry2007mechanism}.

\subsection{Batching by clustering}
\label{subsec:clusters}

\begin{table}[t!]
\small
    \centering
    \begin{tabular}{ccccc}
        \toprule
        \multicolumn{1}{l}{\textbf{Clustering Method}} & \multicolumn{1}{l}{\textbf{$\#~$Clusters}} & 
        \multicolumn{1}{l}{\textbf{$\#~$Clusters (size $\ge 100$)}}   &
        \multicolumn{1}{l}{\textbf{Privacy $\eps$}} &
        \multicolumn{1}{l}{\textbf{V-measure}}\\
        
        \midrule
        \textbf{$k$-means} & $500$ & $464$ & $\infty$ & $1$ \\
        \midrule
        \textbf{DP Clustering} & \multirow{2}{*}{$9$} & \multirow{2}{*}{$6$} & \multirow{2}{*}{$0.1$} & \multirow{2}{*}{$0.1$}\\
        \textbf{\citep{chang2021practicaldpclustering}}\\
        \midrule
        \textbf{DP Clustering} & \multirow{2}{*}{$2$} & \multirow{2}{*}{$2$} & \multirow{2}{*}{$10$} & \multirow{2}{*}{$0.01$}\\
        \textbf{\citep{liebenow2024dpmclusteringsensitivedata}} \\
        \midrule
        \textbf{Public Centers} & $438$ & $140$ & $0$ & $0.59$\\
        \midrule
        \textbf{Public Centers} & \multirow{2}{*}{$100$} & \multirow{2}{*}{$100$} & \multirow{2}{*}{$0.1$} & \multirow{2}{*}{$0.58$}\\
        \textbf{ with Rebalancing} \\
        \bottomrule
    \end{tabular}
    \vspace{10pt}
    \caption{Comparing different clustering methods. $\#~$Clusters is the number of non-singleton clusters, $\#~$Clusters ($\ge 100$ Samples) is the number of clusters with at least 100 samples. V-measure \citep{rosenberg2007v} is a metric to compare the quality of each clustering. Higher V-measure shows more similarity to ground truth ($k$-means with $k=500$). The parameter $\eps$ indicates the privacy cost, with higher values indicating higher privacy cost (see Section \ref{sec:analysis}).} 
    \label{tab:clustering-comparison}
\end{table}

Instead of assigning seeds to batches randomly, in this paper we explore the impact of grouping similar seeds together. We consider implementations of $\batch()$ in Algorithm \ref{alg:main} that have the form
\begin{equation}
\batch(\bs) = (\cluster(\bs), r) \label{eq:batch}
\end{equation}
where $\cluster()$ is a cluster assignment function, and $r$ is chosen uniformly at random from $[b]$. In other words, the seed is first assigned to a cluster, and then within that cluster it is randomly assigned to one of $b$ batches.

The cluster assignment function is implemented using a sentence embedding model $\embed()$, which maps a given input text into a fixed-dimensional embedding space $\bbR^d$. A well-trained sentence embedding model will place similar texts closer together in this space. Given cluster centers $\bc_1, \ldots, \bc_k \in \bbR^d$ the cluster assignment function is defined
\[
\cluster(\bs) = \arg \min_{i \in [k]} \norm{\embed(\bs) - \bc_i}_2.
\]
Thus the batching procedure is fully specified by describing how the cluster centers are selected. We consider the following three methods, each of which has different privacy implications.

\begin{figure}[t!]
    \centering
    {\includegraphics[width=0.99\textwidth]{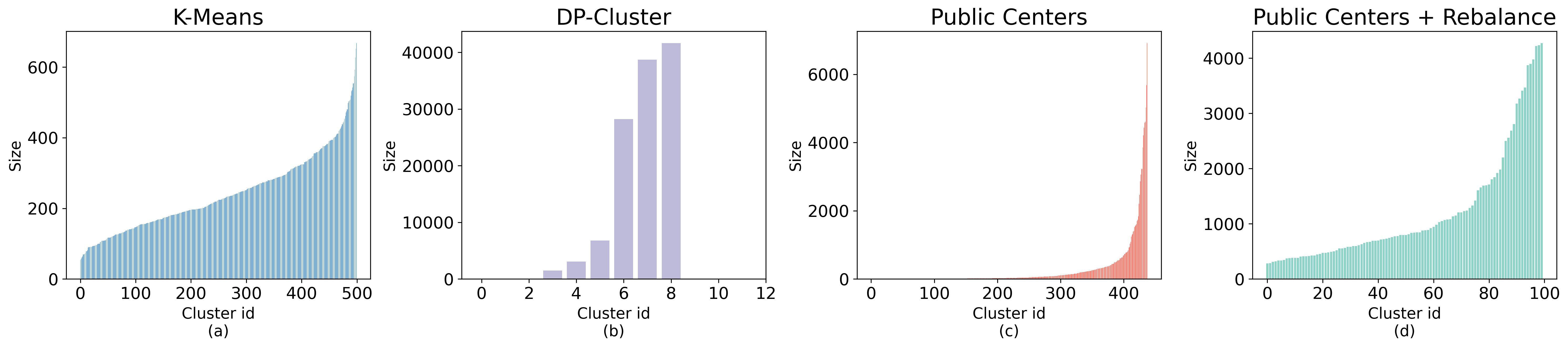}} 
  \caption{Cluster sizes of different clustering methods for AGNews dataset. a) $k$-means ($k=500$) which is not private but gives the most balanced clusters. b) DP-clustering \citep{chang2021practicaldpclustering} ($\eps=0.1$) which is private but most of the data is assigned to a few clusters. c) Clustering with public dataset (DBPedia, $k=500$), which is private and has more valid clusters but still many clusters have only a few examples. d) Clustering with public centers and rebalancing (DBPedia, $k=500$ and rebalancing to $100$ clusters, $\eps=0.1)$. This method does not have any small clusters.} 
  \label{fig:clustering}
\end{figure}

\paragraph{Differentially private centers.} We apply state-of-the-art DP clustering methods \citep{chang2021practicaldpclustering,liebenow2024dpmclusteringsensitivedata} to the seed embeddings to discover the cluster centers. We observe that these methods often fail to find good cluster centers. Most DP clustering algorithms are designed for low-dimensional data, since the amount of privacy-preserving noise injected by the algorithms increases with the dimension, whereas sentence embeddings are typically high-dimensional. 
Figure~\ref{fig:clustering}(b) shows one of the problems with DP clustering \citep{chang2021practicaldpclustering}. Even though the number of target cluster centers $k$ is set to $500$, the algorithm only finds $<10$ non-singleton centers, leading to highly imbalanced clusters.

\paragraph{Public centers.} Given the limitations of privately clustering the seeds, we leverage high-quality public datasets instead. These datasets contain diverse examples, making them useful for clustering. We applied $k$-means clustering to the public data, and used the resulting centers to assign cluster labels to the seeds. Because selecting the cluster centers does not require examining any sensitive data, it does not incur any privacy cost. However, this approach introduces a new challenge: if the public data distribution differs significantly from the sensitive data distribution, the resulting clusters can become highly imbalanced, some with very few examples, and others disproportionately large. Very small clusters are often unusable, while large clusters may still contain heterogeneous data, reducing overall utility. Figure \ref{fig:clustering}(c) illustrates the imbalance problem for public cluster centers.  

\paragraph{Public centers with rebalancing.} To address the cluster imbalance issue, we introduce two additional steps into the process of selecting cluster centers from public data. After obtaining public cluster centers, we compute a noisy count of the seeds assigned to each cluster; this step incurs only a small privacy cost ($\varepsilon\approx0.1$). We then select the cluster centers with the $k'$ highest counts and re-assign the seeds using only these top-$k'$ centers. For example, if we had $k=500$ centers originally, we may reduce them to $k' = 100$ centers after rebalancing. This refinement ensures more balanced clusters while preserving quality, as shown in Figure \ref{fig:clustering}(d), improving both efficiency and utility. Indeed, \citet{tan2025synthesizing} uses a similar technique to improve DP finetuning.

\subsection{Median aggregation}

In addition to improving synthentic data quality, making the batches more homogeneous allows us to modify Algorithm \ref{alg:main} to yield a tighter privacy analysis. Instead of aggregating the clipped logit vectors in a batch by taking their average, we compute their component-wise median:
\begin{equation}
\aggregate(Z) = \median(\{\clip_c(\bz) : \bz \in Z\}), \label{eq:aggregate}
\end{equation}
where $\median(Z)$ is the vector in which each component is equal to the median value of the corresponding components of the vectors in $Z$.

Previous analyses of differentially private inference algorithms for synthetic data generation, such as in \citet{amin2024private} and \citet{tang2024privacy}, were based on the \emph{global} sensitivity of the mean, \emph{i.e.}, on how much the mean of \emph{any} set of logit vectors $Z$ can change when a vector is added or removed from $Z$. Our privacy analysis (in Section \ref{sec:analysis}) is based on the \emph{local} sensitivity of the median, \emph{i.e.}, on how much the component-wise medians of the \emph{actual} set of logit vectors $Z$ can change when a vector is added or removed from $Z$. To see why see the latter sensitivity can be much smaller than the former, note that if $Z$ contains at least 3 identical vectors, then the local sensitivity of $\median(Z)$ is zero. When a batch of seeds texts are all similar to each other, then the logit vectors of their next-token distributions will also be similar, and we exploit this similarity to prove a stronger privacy guarantee than previous work (albeit one that is both data-dependent and output-dependent; see next section).

\paragraph{Significance of clipping.} Given a logit vector $\bz$, $\clip_c$ shifts each component by the same quantity so that $\max_j z_j$ becomes $c$. Any score below $-c$ is then clipped to $-c$. This operation bounds the range of possible values in $\bz$ to $[-c,c]$ and does not complicate the privacy analysis since it operates locally on each $\bz$. While medians are invariant to certain types of shifts, it is important to note that $\clip_c$ applies a different shift to each $\bz$, while $\median$ aggregates across vectors. As a result, $\clip_c$ plays an important role in lowering the local sensitivity of the data. As a very simple example, consider a situation where all inferences are exactly aligned on the next-token distribution. Even if this alignment occurs, there is no reason that the \emph{logit scores} would be aligned since the $\softmax$ operator is scale invariant. In contrast, the clipping operator forces alignment, while preserving the head of the distribution. In this example, the logit score for the most-likely next token will be mapped to $c$, driving the local sensitivity for this token down to zero. 

\section{Privacy analysis}
\label{sec:analysis}

We prove a differential privacy guarantee for Algorithm \ref{alg:main}, which provides an upper bound on the change in the output distribution of the algorithm for any small change to its input. As is standard in differential privacy analyses, the upper bound is expressed in terms of a privacy parameter $\eps$. In contrast to the original definition of differential privacy \citep{dwork2006calibrating}, and following more recent work \citep{papernot2017semi,ligett2017accuracy,papernot2018scalable,chowdhury2020data,ginart2022submix,duan2023flocks,flemings2024adaptively}, we allow $\eps$ to depend on both the input and output of the algorithm, which leads to a \emph{data-dependent ex-post} guarantee.

\begin{defn}[Data-dependent ex-post differential privacy] \label{defn:dp} Let $A : \CD \rightarrow \CO$ be an algorithm. Let $\eps : \CD \times \CO \rightarrow \bbR^{\ge 0}$.  Algorithm $A$ satisfies \emph{$\eps$-data-dependent ex-post differential privacy} if for all neighboring datasets $D, D' \in \CD$ and $X \in \CO$
\[
\exp(-\eps(D, X)) \cdot \Pr[A(D') = X] \le \Pr[A(D) = X] \le \exp(\eps(D, X)) \cdot \Pr[A(D') = X]
\]
\end{defn}

Definition \ref{defn:dp} reduces to the original definition of differential privacy if we require the $\eps$ function to be a constant function. In that special case, the privacy guarantee is a property of the algorithm itself, and holds for worst-case input and output. However, Definition \ref{defn:dp} offers the possibility of a privacy guarantee that is more refined than the worst case, reflecting the fact that certain inputs and outputs have lower privacy risk than others. In any case, the semantic meaning of the privacy guarantee is the same as in the original definition of differential privacy: it quantifies the ability of an adversary to distinguish a small change in the input to the algorithm by only examining its output.

\begin{thm}[Privacy guarantee] \label{thm:main} Algorithm \ref{alg:main} with $\batch()$ and $\aggregate()$ set as in Eq.~\eqref{eq:batch} and Eq.~\eqref{eq:aggregate} satisfies $\eps$-data-dependent ex-post differential privacy, where
\[
\eps(D, X) = \max_{i \in [m]} \sum_{t=1}^{n} \gamma(Z_{i,t}, x_{i, t}),
\]
and the \emph{per-token privacy cost} function $\gamma$ is defined in Appendix \ref{sec:proofs}.
\end{thm}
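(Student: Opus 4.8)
The plan is to reduce the privacy loss of the whole algorithm, for a fixed output $X$, to a product of per-token likelihood ratios inside a single batch, and then to bound each of those using the \emph{local} sensitivity of the component-wise median at the realized set of logit vectors. Fix neighbors $D,D'$ differing in a single seed $\bs^*$; by symmetry assume $\bs^*\in D$. I would couple the two executions so that all randomness other than token sampling is shared — in particular the within-cluster assignment $r\in[b]$ of every common seed — so that the sets $Z_{i,t}$ appearing in the statement are well defined and each batch $i$ receives the same seed set in both runs except for the single batch $i^\star$ into which $\bs^*$ falls. Since Algorithm~\ref{alg:main} generates each batch's synthetic example independently, the likelihood ratio $\Pr[A(D)=X]/\Pr[A(D')=X]$, conditioned on the label of $\bs^*$, equals the likelihood ratio restricted to batch $i^\star$; averaging over the (uniform) label of $\bs^*$ and bounding termwise, it then suffices to bound the per-batch likelihood ratio for \emph{each} batch $i$ by $\exp\!\big(\pm\sum_{t=1}^n\gamma(Z_{i,t},x_{i,t})\big)$, the outer $\max_{i\in[m]}$ in the statement absorbing the fact that we do not control which batch $\bs^*$ joins.

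Next, fix the affected batch and condition on the generated example $\bx=x_1\cdots x_n\in X$, which pins down every prefix $\bx_{t-1}$. By the chain rule the probability that this batch produces $\bx$ is $\prod_{t=1}^n\softmax(\bbaz_t/\tau)_{x_t}$, and removing $\bs^*$ changes only the sets $Z_{i,t}=\{\logits(\bs\bx_{t-1}):\bs\in S_i\}$ — each by deleting a single vector. The crucial observation is that drawing $x_t\sim\softmax(\bbaz_t/\tau)$ with $\bbaz_t=\median(\{\clip_c(\bz):\bz\in Z_{i,t}\})$ is exactly the exponential mechanism \citep{mcsherry2007mechanism} on the ``database'' $Z_{i,t}$ with quality score $y\mapsto(\median(\clip_c(Z_{i,t})))_y$ and inverse temperature $1/\tau$. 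Hence the per-batch likelihood ratio factors over $t$, and it remains to bound, for each $t$, the exponential-mechanism privacy loss of outputting $x_t$ when one logit vector is added to or removed from $Z_{i,t}$.

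For the per-token bound I would write the log-likelihood ratio of the exponential mechanism for output $y$ between neighboring sets $Z,Z'$ as $\tfrac1\tau\big[(\median\circ\clip_c)(Z)_y-(\median\circ\clip_c)(Z')_y\big]$ plus the difference of log-normalizers, and bound the latter by $\tfrac1\tau\max_{y'}\big|(\median\circ\clip_c)(Z')_{y'}-(\median\circ\clip_c)(Z)_{y'}\big|$. Both quantities are controlled by the local sensitivity of the component-wise median at $Z$: for each component, inserting or deleting one entry moves the median by at most the gap between adjacent middle order statistics of that component's values, which is at most $2c$ after clipping and — the point of the construction — is small, even zero, when the vectors in $Z$ agree on their top tokens (here clipping matters, since $\clip_c$ forces the largest component of every $\bz$ to the common value $c$). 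Taking the worst case over the single added/removed vector and over the sign of the loss yields a quantity depending only on $(Z_{i,t},x_{i,t})$ — this is the definition of $\gamma(Z_{i,t},x_{i,t})$ — and composing the token-wise bounds multiplicatively over $t=1,\dots,n$ and maximizing over batches gives $\eps(D,X)=\max_i\sum_t\gamma(Z_{i,t},x_{i,t})$.

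The step I expect to be the crux is the precise construction and analysis of $\gamma$. It must simultaneously (i) upper-bound the exponential-mechanism privacy loss for \emph{both} directions of the neighbor relation, so a single function handles $D=D'\cup\{\bs^*\}$ and $D'=D\cup\{\bs^*\}$; (ii) be a function of $(Z_{i,t},x_{i,t})$ alone, i.e.\ expressed through the local sensitivity of the median \emph{at the realized set} rather than any global quantity; and (iii) retain the log-normalizer contribution in a form that still composes additively across the $n$ tokens. Getting the order-statistic bookkeeping right — insertion versus deletion flips the parity of $|Z_{i,t}|$ and hence which order statistics define the median — and verifying that clipping drives the relevant gaps, and therefore $\gamma$, to zero in the aligned case, is where the real work lies; the reductions in the first two paragraphs are essentially bookkeeping once the exponential-mechanism view is in hand.
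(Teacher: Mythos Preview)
Your plan is the paper's proof: condition on the random within-cluster assignment $r$, reduce to the single affected batch, factor by the chain rule over the $n$ tokens, and bound each conditional likelihood ratio via the bracketing $\leftmedian(Z)_x\le\median(Z')_x\le\rightmedian(Z)_x$ for neighboring $Z,Z'$ (this is exactly the ``gap between adjacent middle order statistics'' you describe, and it covers both add and remove, so no ``by symmetry'' is needed). The one place your sketch is looser is the normalizer term: you propose to bound $\big|\log\sum_y\exp(\baz'_y/\tau)-\log\sum_y\exp(\baz_y/\tau)\big|$ by $\tfrac1\tau\max_{y'}|\baz'_{y'}-\baz_{y'}|$, whereas the paper keeps the full sums and applies the componentwise bracketing directly to get $\sum_y\exp(\baz^{\leftop}_y/\tau)\le\sum_y\exp(\baz'_y/\tau)\le\sum_y\exp(\baz^{\rightop}_y/\tau)$. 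That sharper step yields the specific $\gamma$ of the appendix, $\gamma(Z,x)=\max\{\log(1/\alpha(Z,x)),\log\beta(Z,x)\}$ with $\alpha,\beta$ carrying the exact softmax denominators; your $\max_{y'}$ route would prove the theorem only for a (strictly larger, in general) $\gamma$. Apart from this refinement your argument coincides with the paper's.
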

A formal definition of the function $\gamma$ in Theorem \ref{thm:main} is given in Appendix \ref{sec:proofs}, and here we offer some intuition for why it quantifies the privacy cost of Algorithm \ref{alg:main}. For a set of logit vectors $Z$, let $Z^{(x)}$ be the values in the $x^{\textrm{th}}$ component of each of the vectors. These are the logit scores corresponding to token $x$. If we sort the logit scores in $Z^{(x)}$ in ascending order, then the median is the middle value, and the values that are adjacent to the median define what we call the \emph{median gap}. When the adjacent values are far from the median, the median gap is large, and otherwise it is small. The size of the median gap determines the local sensitivity of the median, since adding or removing a value from $Z^{(x)}$ can cause the median to shift to one of the adjacent values. The function $\gamma(Z, x)$ is an increasing function of the median gap of $Z^{(x)}$, and so higher median gaps lead to higher privacy cost.

\section{Experiments}

\subsection{Experiment setup}\label{subsec:experiment-setup}
\paragraph{Models.} We use Gemma 2 2B models \citep{gemmateam2024gemma2} as the generator for all experiments. We use both the pre-trained (PT) and instruction-tuned (IT) variants of the models. Note the variants necessitate using different prompts, which we give in Appendix \ref{sec:prompts}. All tasks use the same generic prompt template.

\paragraph{Datasets.} Our algorithms utilize a \emph{public dataset} in addition to the target \emph{private dataset} we aim to synthesize. For private datasets in our experiments: we use \emph{AGNews}, \emph{Yelp}, and \emph{NYT Topics}; all of which are equipped with a multi-class classification task. We use \emph{DBPedia} as our sole public dataset for computing public clusters used in all experiments. We chose this dataset since it is based on Wikipedia, which (a) contains a wide variety of topics and therefore is a good candidate for universal clusters; and (b) reflects the kind of public data permissible for use in real deployments. For further details on all datasets used, see Appendix \ref{sec:additional-experiment-details}.

\renewcommand{\arraystretch}{0.87}
\begin{table}[t!]
    \centering
    \scalebox{0.95}{
        \small
		\begin{tabular}{l l c c r r}
		\toprule
		    \textbf{Dataset} & 
            \textbf{Method} & 
            \textbf{Privacy $\varepsilon$} & 
            \textbf{Clusters} & 
            \textbf{MAUVE} & 
            \textbf{Accuracy} \\
			\midrule
			\multirow{11}{*}{AGNews} & Real data & $\infty$ & - & $.872_{.032}$ & $.938_{.001}$ \\
			\cmidrule{2-6}
            & Mean Baseline \citep{amin2024private} & 10 & 4 &  $.156_{.024}$ &   $.704_{.009}$\\
            \cmidrule{2-6}
            & Mean Baseline++ & 10 & 4 & $.633_{.022}$ & $.851_{.015}$ \\
            & Mean Clustered & $9.90+0.1$ & 60 & $.692_{.029}$ & $.855_{.012}$ \\
            & Median Clustered & $2.40+0.1$ & 60 & $.713_{.027}$ & $.868_{.002}$ \\
			\cmidrule{2-6}
			& Mean Baseline  & 3 & 4 &  $.141_{.016}$ & $.701_{.016}$  \\
			\cmidrule{2-6}
            & Mean Baseline++ & 3 & 4 & $.622_{.024}$ & $.833_{.006}$ \\
            & Mean Clustered & $2.90+0.1$ & 60 & $.687_{.034}$ & $.846_{.002}$ \\
            & Median Clustered & $1.22+0.1$ & 60 & $.688_{.046}$ & $.860_{.004}$ \\
            \midrule
            \multirow{11}{*}{Yelp} & Real data & $\infty$ & - & $.874_{.012}$ & $.975_{.000}$ \\
			\cmidrule{2-6}
            & Mean Baseline \citep{amin2024private} & 10 & 2 &  $.136_{.014}$ & $.915_{.006}$  \\
            \cmidrule{2-6}
            & Mean Baseline++ & 10 & 2 & $.415_{.031}$ & $.899_{.014}$ \\
            & Mean Clustered & $9.90+0.1$ & 60 & $.449_{.021}$ & $.906_{.014}$ \\
            & Median Clustered & $2.21+0.1$ & 60 & $.460_{.019}$ & $.912_{.009}$ \\
			\cmidrule{2-6}
			& Mean Baseline & 3 & 2 &  $.136_{.012}$ & ${.880_{.014}}$  \\
			\cmidrule{2-6}
            & Mean Baseline++ & 3 & 2 & $.391_{.054}$ & $.907_{.007}$ \\
            & Mean Clustered & $2.90+0.1$ & 60 & $.436_{.032}$ & $.904_{.015}$ \\
            & Median Clustered & $1.38+0.1$ & 60 & $.451_{.038}$ & $.904_{.014}$ \\
            \midrule
            \multirow{11}{*}{NYT Topic} & Real data & $\infty$ & - & $.863_{.009}$ & $.919_{.001}$ \\
			\cmidrule{2-6}
            & Mean Baseline \citep{amin2024private} & 10 & 8 & $.151_{.013}$ & $.668_{.024}$  \\
            \cmidrule{2-6}
            & Mean Baseline++ & 10 & 8 & $.613_{.045}$ & $.776_{.008}$ \\
            & Mean Clustered & $9.90+0.1$ & 80 & $.716_{.038}$  & $.796_{.001}$  \\
            & Median Clustered & $5.04 + 0.1$ & 80 & $.681_{.043}$ & $.797_{.001}$ \\
			\cmidrule{2-6}
			& Mean Baseline & 3 & 8 &  $.155_{.018}$ & $.668_{.026}$  \\
			\cmidrule{2-6}
            & Mean Baseline++ & 3 & 8 & $.637_{.055}$ & ${.782_{.007}}$ \\
            & Mean Clustered & $2.90+0.1$ & 80 & $.665_{.017}$ & $.788_{.002}$ \\
            & Median Clustered & $1.72 + 0.1$ & 80 & $.659_{.053}$ & $.780_{.006}$ \\
            \bottomrule
		\end{tabular}
		}
	    \vspace{10pt}
        \caption{Performance of our methods compared to \emph{Mean Baseline} (the algorithm of \cite{amin2024private}). We report the mean and std of MAUVE against real data (5 seeds) and downstream accuracy of a BERT model trained on the synthetic data (3 seeds). Our improved baseline (\emph{Mean Baseline++}) shows sharp increases in MAUVE across all settings, as well as classification accuracy on AGNews and NYT Topic.
        On top of this stronger baseline, gains from clustering stack, and lead to consistent and direct improvements to MAUVE across all settings. Baselines that do not employ clustering are still batched with other examples of the same label; our datasets have 4, 2, and 8 labels respectively. For results employing clustering, we report the privacy cost of inference as well as the $\varepsilon=0.1$ cost of cluster rebalancing. \emph{Median Clustered} achieves better or comparable quality when compute-and-output-token-matched, while admitting a tight ex-post data-dependent DP analysis.}\label{table:results}
        \vspace{-15pt}
\end{table} 

\paragraph{Evaluation.} We evaluate all methods on the following two metrics. \textbf{(1) BERT Accuracy:} we train a BERT model on generated synthetic data and report its final accuracy on a held-out set consisting of real data. To compute BERT accuracy, we split the synthetic data into a synthetic train and validation set for model selection, and applying the best checkpoint on real data.
\textbf{(2) MAUVE Score:} this metric measures the distributional similarity between the real private data and synthetic data. Mauve score ranges from 0 to 1 and a score indicates better alignment with the original data distribution and, therefore, higher-quality synthetic data. We compute MAUVE with Gecko embeddings \cite{lee2024gecko}, using 1K samples from both sets.

\paragraph{Baselines.} We implement the method of \cite{amin2024private} as a baseline, which only differs algorithmically in batching, and that the aggregated sampling logits is obtained via the mean rather than the median. Other DP inference synthetic data approaches in the literature \citep{tang2024privacy, gao2025dataadaptive} focus on generating few-shot examples for prompting, and have not demonstrated the ability to generate enough data ($\geq$2K examples) to compute MAUVE or finetune BERT. \emph{Mean Baseline} is the setup described in \cite{amin2024private}, using an IT model and prompt, 1 example per context, and within-label batching. \emph{Mean Baseline++} uses 2 examples in-context and switches to a pre-trained checkpoint and prompt. Due to computational constraints, we tuned hyperparameters on AGNews and fixed them for the other datasets. For all experiments, we use a sampling temperature of 1.5. We use 64 parallel contexts for $\varepsilon=10$ experiments and 256 parallel contexts for $\varepsilon=3$ experiments.

\paragraph{Privacy budget.} We report results for two settings: $\eps=3$ and $\eps=10$. For mean aggregation, we set the approximate differential parameter $\delta = (\texttt{dataset\_size})^{-1.1}$. The privacy budget includes the total $\eps$ used for both clustering and generation. For mean aggregation, we report unconditional $\varepsilon$. For median aggregation, we match the number of tokens generated and batch size (thereby matching output quantity and compute requirements), and report the resultant data-dependent ex-post $\varepsilon$.

\paragraph{Clustering and batching.} We start with 1000 clusters produced from DBPedia, and perform DP rebalancing as described in Section \ref{subsec:clusters} to target 60 clusters for Yelp and AGNews, and 80 clusters for NYT Topics (10 clusters for each of the 8 labels). For implementation reasons, we subdivide each cluster into fixed-sized batches, rather than random-sized batches as in Eq.~\eqref{eq:batch}.

\subsection{Results}

Table \ref{table:results} summarizes our main results on all three datasets. We demonstrate that public-cluster-informed batching as a drop-in replacement for naive batching improves over the baseline -- that is, simply adjusting the input batching to the algorithm and changing no other algorithmic details leads to significant improvements in representativeness. The same is the case for switching to the pretrained model, demonstrating stacking improvement. Furthermore, we show that our newly introduced median aggregation algorithm can achieve quality comparable or surpassing that of the mean algorithm, while admitting a tight, ex-post data-dependent DP analysis. We also remark that the privacy guarantee we give for the median algorithm is \emph{maximum} over all batches. In Appendix B, we plot the distribution of per-batch privacy costs (many batches are <50\% of the stated guarantee). Designing algorithms to take advantage of this property is an interesting avenue for future work.

\section{Related work}
\paragraph{Differentially private synthetic data.} Prior work on generating synthetic data with differential privacy guarantees can be broadly categorized into three categories:

\begin{itemize}
    \item \textbf{(A) Training-based} methods finetune language models on private data using differentially private stochastic gradient descent  \citep{yue2022synthetic,yue-etal-2023-synthetic,mattern-etal-2022-differentially,carranza-etal-2024-synthetic,kurakin2023harnessing,wang2024knowledgesg}. After training, the model is used to generate synthetic data. More recent studies \citep{wu2024prompt,tan2025synthesizing,tran2024differentially} leverage the abundance of public data by first finetuning the model on public datasets before applying differentially private finetuning on the private data.

\item \textbf{(B) API-based} methods generate synthetic data using only model APIs \citep{xie2024differentially,pmlr-v235-yu24e,wu2024privacypreserving,lin2024differentially,lin2025differentially}. They query the LLM with private examples and ask it to select the closest matching samples from a non-private dataset. They iteratively refine the output to ensure that it is similar to the private data. %

\item \textbf{(C) Inference-based} methods leverage private prediction \citep{dwork2018privacy}, which ensures the privacy of model outputs (i.e., predictions). A widely used approach to achieve this is privacy amplification by subsampling and private aggregation \citep{nissim2007smooth}. When this methodology is applied to LLMs, the model generates the next token for each subset of private data, and the predictions are then privately aggregated to produce the final output \citep{hong2023dp,amin2024private,tang2024privacy,gao2024data}.

\end{itemize}

\paragraph{Differentially private clustering.} Early foundational work on differentially private clustering \citep{wang2015differentially,su2016differentially,feldman2009private,nissim2016locating} established strong theoretical bounds for private clustering. Subsequent works have improved the practical aspects, focusing on balancing utility, efficiency, and privacy. The most common approach \citep{balcan2017differentially,chaturvedi2020differentially,cohen2022scalable} is to project the data into a lower dimension to reduce the additive error while preserving the relative distance, and then try to efficiently find good centers.

\section{Conclusion}

We have proposed a novel differentially private inference method for generating private synthetic data. Our method uses a clustering algorithm to group the input data into batches of similar examples, and leverages the resulting data homogeneity to generate high-quality synthetic data at significantly lower privacy cost. 

Our methods has some limitations, which may be addressed in future work. In contrast to ordinary (non-private) inference, the computational cost of running DP inference is linear in the batch size. On the other hand, data generation is \emph{embarassingly parallel} across batches, whereas many serial steps are required for DP training. While clustering improves the overall privacy and quality of synthetic data, computing clusters creates computational overhead and spends a (small) amount of additional privacy budget. Finally, the underlying technique for aggregating the next-token prediction requires more than pure-inference access to the underlying LLM. The algorithm needs access to the logits, specifically, which may not be exposed by the APIs of all commercial LLMs.

\section*{Author contributions}\label{sec:contributions}

\begin{itemize}
    \item {\bf Sara B} is the main contributor. She implemented the method and experimented and optimized its many variations.
    \item {\bf Alex B} contributed the final version of the main experiments and maintained the infrastructure for evaluating results.
    \item {\bf Umar S} proposed the core theoretical framework, including the median algorithm, and led the theoretical analysis.
    \item {\bf Kareem A} supervised {\bf Sara B} and organized the project and final paper.
    
    \item {\bf Sara B}, {\bf Alex B}, {\bf Umar S} and {\bf Kareem A} wrote the paper. 
    \item {\bf Everyone} contributed to refining the paper, discussing the broader impact, and framing of the work.
\end{itemize}

\newpage

\bibliographystyle{unsrtnat}
\bibliography{custom}

\newpage

\appendix
\section{Proof of Theorem \ref{thm:main}}
\label{sec:proofs}

\begin{defn}[Left-median, median and right-median] Let $Z \subseteq \CZ$ be a set of logit vectors. Let $\leftmedian(Z)$, $\median(Z)$, $\rightmedian(Z) \in \CZ$ be logit vectors, where the component of each vector corresponding to token $x$ is defined in terms of the multiset $Z^{(x)} = \{z_x : \bz \in Z\} \subseteq \bbR$ as follows:
\begin{itemize}
    \item If $|Z^{(x)}|$ is even, and $a$ and $b$ are the middle values in $Z^{(x)}$ (when all of the values are sorted), then $\leftmedian(Z)_x = a$, $\median(Z)_x = (a + b)/2$ and $\rightmedian(Z)_x = c$.
    \item If $|Z^{(x)}|$ is odd, and $a, b$ and $c$ are the middle values in $Z^{(x)}$ (when all of the values are sorted), then $\leftmedian(Z)_x = a$, $\median(Z)_x = b$, $\rightmedian(Z)_x = c$.
\end{itemize}
Note that since $Z^{(x)}$ is a multiset, it may contain repeated values, and therefore for any token $x$ it can happen that any of the consecutive values above are equal.
\end{defn}

The quantities in Definition \ref{defn:privacycostfunction} below depend on $\tau > 0$, but we have dropped this dependence from the notation to reduce clutter.

\begin{defn}[Per-token privacy cost function] \label{defn:privacycostfunction} For any set of logit vectors $Z \subseteq \CZ$ and token $x \in \CX$ let
\begin{align*}
\alpha(Z, x) &= \exp((\baz_x - \baz^{\rightop}_x)/\tau) \cdot \frac{\sum_y \exp(\baz^{\leftop}_y /\tau)}{\sum_y \exp(\baz_y /\tau)}\\
\beta(Z, x) &= \exp((\baz_x - \baz^{\leftop}_x)/\tau) \cdot \frac{\sum_y \exp(\baz^{\rightop}_y /\tau)}{\sum_y \exp(\baz_y /\tau)}\\
\gamma(Z, x) &= \max\left\{\log \frac{1}{\alpha(Z, x)}, \log \beta(Z, x)\right\}
\end{align*}
where $\bbaz^{\leftop} = \leftmedian(Z)$, $\bbaz = \median(Z)$ and $\bbaz^{\rightop} = \rightmedian(Z)$.
\end{defn}

\begin{lem} \label{lem:medianbounds} Let $Z, Z' \subseteq \CZ$ be neighboring sets of logit vectors. For each token $x \in \CX$ we have
\[
\leftmedian(Z)_x \le \median(Z')_x \le \rightmedian(Z)_x
\]\end{lem}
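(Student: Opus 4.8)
The plan is to reduce the lemma to an elementary interlacing property of order statistics under a single insertion or deletion. Concretely: if a multiset of $n$ reals with sorted values $v_1 \le \dots \le v_n$ is modified by adding one element, then the $j$-th smallest value $u_j$ of the resulting size-$(n+1)$ multiset satisfies $v_{j-1} \le u_j \le v_j$; and if instead one element is removed, the $j$-th smallest value $u_j$ of the resulting size-$(n-1)$ multiset satisfies $v_j \le u_j \le v_{j+1}$. Here we use the conventions $v_0 = -\infty$ and $v_{n+1} = +\infty$. I would first record this fact with a one-line counting argument: for any real threshold $v$, the number of elements lying strictly below $v$ changes by at most one under a single insertion/deletion, which immediately bounds how far each order statistic can move.

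Next I fix a token $x$, set $n = |Z^{(x)}|$ with sorted values $v_1 \le \dots \le v_n$, and note that $Z'^{(x)}$ is obtained from $Z^{(x)}$ by adding or removing exactly one value (since $Z$ and $Z'$ are neighbors). I then split into four cases by the parity of $n$ and by insertion vs.\ deletion. In each case $\median(Z')_x$ is either a single middle order statistic of $Z'^{(x)}$ (when $|Z'^{(x)}|$ is odd) or the average of the two middle ones (when it is even); I bound each such order statistic of $Z'^{(x)}$ by the appropriate pair of consecutive order statistics of $Z^{(x)}$ via the interlacing fact, and observe that the resulting extreme values are exactly $\leftmedian(Z)_x$ and $\rightmedian(Z)_x$. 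For example, if $n = 2k+1$ and an element is added, then $|Z'^{(x)}| = 2k+2$, so $\median(Z')_x = \tfrac12(u_{k+1} + u_{k+2})$ with $u_{k+1} \in [v_k, v_{k+1}]$ and $u_{k+2} \in [v_{k+1}, v_{k+2}]$, hence $\median(Z')_x \in [v_k, v_{k+2}] = [\leftmedian(Z)_x, \rightmedian(Z)_x]$; the case $n = 2k$ with addition, and the two deletion cases, are handled identically, each time producing the bounds $\leftmedian(Z)_x$ and $\rightmedian(Z)_x$ by definition of those quantities. (The degenerate situations where $\leftmedian$, $\median$, and $\rightmedian$ coincide because $Z^{(x)}$ is short or has repeated entries are covered by the same inequalities; the only genuinely excluded case is $|Z| = 1$ with a deletion, where $Z'$ is empty and the statement does not apply.)

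Taking the conjunction over the four cases gives the two-sided bound for the arbitrary token $x$, which is the lemma. I do not anticipate any real obstacle: the argument is purely combinatorial, and the only points requiring care are (i) keeping the indices straight across the parity $\times$ (insert/delete) cases, and (ii) remembering that when $|Z'^{(x)}|$ is even the median is the \emph{average} of two order statistics, so both of them must be bounded — bounding only one would give a weaker (and insufficient) conclusion.
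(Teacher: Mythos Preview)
Your proposal is correct and follows the same idea as the paper: a single insertion or deletion into a multiset can only move the median to an adjacent order statistic, which is precisely the interval $[\leftmedian(Z)_x, \rightmedian(Z)_x]$. The paper compresses the entire argument into a single sentence, whereas you spell out the interlacing of order statistics and the four parity/insertion--deletion cases; your version is more careful but not conceptually different.
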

\begin{proof} Adding or removing a value from a multiset either leaves the median of the multiset unchanged, or shifts the median to the next higher or next lower value.
\end{proof}

\begin{lem} \label{lem:oneiteration} In Algorithm \ref{alg:main}, suppose that batch $S_i$ is replaced by neighboring batch $S'_i$. For all $t \ge 1$, token $x \in \CX$ and token sequence $\bx \in \CX^{t-1}$
\[
\alpha(Z_{i,t}, x) \le \frac{\Pr[x_{i, t} = x ~|~ \bx_{i, t-1} = \bx]}{\Pr[x'_{i, t} = x ~|~ \bx'_{i, t-1} = \bx]} \le \beta(Z_{i,t}, x)
\]
where $\bx'_{i, t} = (x'_{i, 1}, \ldots, x'_{i, t})$ is the token sequence generated when processing batch $S'_i$.\end{lem}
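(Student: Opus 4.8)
\emph{Proof plan.} The plan is to reduce the claim to a direct comparison of two exponential-mechanism (softmax) distributions and then invoke Lemma~\ref{lem:medianbounds}. Fix $i$, $t$, and the prefix $\bx \in \CX^{t-1}$, and condition on $\bx_{i,t-1} = \bx$ (resp.\ $\bx'_{i,t-1} = \bx$ for the primed run). Under this conditioning the multiset of logit vectors used at step $t$ is the deterministic quantity $Z_{i,t} = \{\logits(\bs\bx) : \bs \in S_i\}$ (resp.\ $Z'_{i,t} = \{\logits(\bs\bx) : \bs \in S'_i\}$), so the token-sampling step draws $x$ with probability exactly $\exp(\baz_x/\tau)/\sum_y \exp(\baz_y/\tau)$, where $\bbaz = \aggregate(Z_{i,t})$, and likewise with $\bbaz' = \aggregate(Z'_{i,t})$ in the primed run. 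Hence the ratio in the statement equals
\[
\exp\!\left((\baz_x - \baz'_x)/\tau\right)\cdot\frac{\sum_y \exp(\baz'_y/\tau)}{\sum_y \exp(\baz_y/\tau)},
\]
and I will bound this "token term" and this "partition-function term" separately.

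Next I set up Lemma~\ref{lem:medianbounds}. Since $\aggregate(Z) = \median(\{\clip_c(\bz) : \bz \in Z\})$ and $\clip_c$ acts on each logit vector independently, replacing $S_i$ by the neighboring batch $S'_i$ changes the clipped multisets $W := \{\clip_c(\bz) : \bz \in Z_{i,t}\}$ and $W' := \{\clip_c(\bz) : \bz \in Z'_{i,t}\}$ by a single element (with multiplicity); that is, $W$ and $W'$ are neighbors. Applying Lemma~\ref{lem:medianbounds} component-wise to $W, W'$ gives, for every token $y$,
\[
\baz^{\leftop}_y \le \baz'_y \le \baz^{\rightop}_y,
\]
where $\bbaz^{\leftop} = \leftmedian(W)$, $\bbaz = \median(W)$, $\bbaz^{\rightop} = \rightmedian(W)$ are exactly the vectors appearing in Definition~\ref{defn:privacycostfunction}.

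To finish, I substitute these component-wise bounds into the two factors. For the upper bound: $\baz'_x \ge \baz^{\leftop}_x$ makes the token term at most $\exp((\baz_x - \baz^{\leftop}_x)/\tau)$, while $\baz'_y \le \baz^{\rightop}_y$ applied term by term makes the primed normalizer at most $\sum_y \exp(\baz^{\rightop}_y/\tau)$; multiplying gives ratio $\le \beta(Z_{i,t},x)$. For the lower bound: the mirror-image choices $\baz'_x \le \baz^{\rightop}_x$ and $\baz'_y \ge \baz^{\leftop}_y$ give ratio $\ge \alpha(Z_{i,t},x)$. Each of these steps uses only monotonicity of $\exp$ and the fact that a sum of nonnegative terms respects term-wise inequalities. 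I expect no serious obstacle: once Lemma~\ref{lem:medianbounds} is in hand the computation is short. The only points needing care are (i) getting the directions of the four inequalities right --- for the upper bound one wants the primed numerator score small and the primed normalizer large, and the reverse for the lower bound --- and (ii) noting that the vectors in Definition~\ref{defn:privacycostfunction} are built from the \emph{clipped} multiset, so one must first use the locality of $\clip_c$ to see that neighboring batches yield neighboring clipped multisets before Lemma~\ref{lem:medianbounds} applies as a black box.
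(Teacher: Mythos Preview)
Your proposal is correct and follows essentially the same route as the paper's proof: express the conditional probability ratio as $\exp((\baz_x-\baz'_x)/\tau)\cdot\sum_y\exp(\baz'_y/\tau)/\sum_y\exp(\baz_y/\tau)$, then apply Lemma~\ref{lem:medianbounds} componentwise to sandwich $\bbaz'$ between $\bbaz^{\leftop}$ and $\bbaz^{\rightop}$ and read off $\alpha$ and $\beta$. You are in fact slightly more careful than the paper in making explicit that the locality of $\clip_c$ is what guarantees the clipped multisets $W,W'$ are neighbors, which is the hypothesis Lemma~\ref{lem:medianbounds} needs.
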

\begin{proof} Let $\bbaz = \median(Z_{i,t})$, $Z' = \{\clip_c(\logits(\bs\bx)) : \bs \in S'_i\}$ and $\bbaz' = \median(Z')$. We have
\begin{align}
\Pr[x_{i, t} = x ~|~ \bx_{i, t-1} = \bx] &= \frac{\exp(\baz_x /\tau)}{\sum_y \exp(\baz_y /\tau)} \notag\\
&= \frac{\exp(\baz'_x /\tau)}{\sum_y \exp(\baz_y /\tau)} \cdot \exp((\baz_x - \baz'_x) /\tau) \notag\\
&= \frac{\exp(\baz'_x /\tau)}{\sum_y \exp(\baz'_y /\tau)} \cdot \exp((\baz_x - \baz'_x)/\tau) \cdot \frac{\sum_y \exp(\baz'_y /\tau)}{\sum_y \exp(\baz_y /\tau)} \notag\\
&= \Pr[x'_{i, t} = x ~|~ \bx'_{i, t-1} = \bx] \cdot \exp((\baz_x - \baz'_x)/\tau) \cdot \frac{\sum_y \exp(\baz'_y /\tau)}{\sum_y \exp(\baz_y /\tau)} \label{eq:one}
\end{align}
Continuing from above
\begin{align*}
\textrm{Eq.~\eqref{eq:one}} &\ge \Pr[x'_{i, t} = x ~|~ \bx'_{i, t-1} = \bx] \cdot \exp((\baz_x - \baz^{\rightop}_x)/\tau) \cdot \frac{\sum_y \exp(\baz^{\leftop}_y /\tau)}{\sum_y \exp(\baz_y /\tau)} & \because \textrm{Lemma \ref{lem:medianbounds}}\\
&= \Pr[x'_{i, t} = x ~|~ \bx'_{i, t-1} = \bx] \cdot \alpha(Z_{i,t}, x)
\end{align*}
and
\begin{align*}
\textrm{Eq.~\eqref{eq:one}} &\le \Pr[x'_{i, t} = x ~|~ \bx'_{i, t-1} = \bx] \cdot \exp((\baz_x - \baz^{\leftop}_x)/\tau) \cdot \frac{\sum_y \exp(\baz^{\rightop}_y /\tau)}{\sum_y \exp(\baz_y /\tau)} & \because \textrm{Lemma \ref{lem:medianbounds}}\\
&= \Pr[x'_{i, t} = x ~|~ \bx'_{i, t-1} = \bx] \cdot \beta(Z_{i,t}, x) & \qedhere
\end{align*}
\end{proof}

We are now ready to prove Theorem \ref{thm:main}.

\begin{proof}[Proof of Theorem \ref{thm:main}] Let $D, D' \in \CD$ be neighboring datasets. For each seed, we can condition on a fixed value for the random integer $r$ selected in Eq.~\eqref{eq:batch}, since it is chosen independently of the dataset. Since the $\batch()$ function assigns each seed to one batch, there exists a single batch that differs by one seed when Algorithm \ref{alg:main} is run on input dataset $D$ instead of $D'$. Let $S_i$ and $S'_i$ be these neighboring batches. Let $x_{i,1}, \ldots, x_{i, n}$ and $x'_{i, 1}, \ldots, x'_{i, t}$ be the sequences of tokens generated when processing $S_i$ and $S'_i$, respectively. For each $t \in [n]$ let $\bx_{i,t} = (x_{i, 1}, \ldots, x_{i, t})$ and $\bx'_{i, t} = (x'_{i, 1}, \ldots, x'_{i, n})$ denote the first $t$ tokens of $\bx_{i, n}$ and $\bx'_{i, n}$, respectively. Also fix a token sequence $\by_n = (y_1, \ldots, y_n) \in \CX^n$, and for each $t \in [n]$ let $\by_t = (y_1, \ldots, y_t)$ denote the first $t$ tokens of $\by_n$. We have
\[
\frac{\Pr[\bx_{i, n} = \by_n]}{\Pr[\bx'_{i, n} = \by_n]} = 
\frac{\Pr[x_{i, 1} = y_1]}{\Pr[x'_{i, 1} = y_1]} \cdot \frac{\Pr[x_{i, 2} = y_2] ~|~ \bx_{i, 1} = \by_1]}{\Pr[x'_{i, 2} = y_2 ~|~ \bx'_{i, 1} = \by_1]} \cdots \frac{\Pr[x_{i, n} = y_n] ~|~ \bx_{i, n-1} = \by_{n-1}]}{\Pr[x'_{i, n} = y_n ~|~ \bx'_{i, n-1} = \by_{n-1}]}
\]
Taking logarithm of both sides and applying Lemma \ref{lem:oneiteration} we have
\[
\sum_{t=1}^n \log \alpha(Z_{i,t}, x_{i,t}) \le \log \frac{\Pr[\bx_{i, n} = \by_n]}{\Pr[\bx'_{i, n} = \by_n]} \le \sum_{t=1}^n \log \beta(Z_{i,t}, x_{i, t})
\]
which proves the theorem.
\end{proof}

\newpage

\section{Experiments continued}
\label{sec:additional-experiment-details}

\subsection{Median mechanism privacy cost}

Unlike unconditional, worst-case $\varepsilon$ that admits \emph{uniform} per-token privacy costs, the median mechanism's privacy cost differs batch-by-batch and also position-by-position. Figures \ref{fig:agnews-per-batch-epsilon}

\begin{figure*}[h]
  \centering
  \includegraphics[width=0.45\textwidth, trim=0cm 0.2cm 0.0cm 0.2cm, clip]{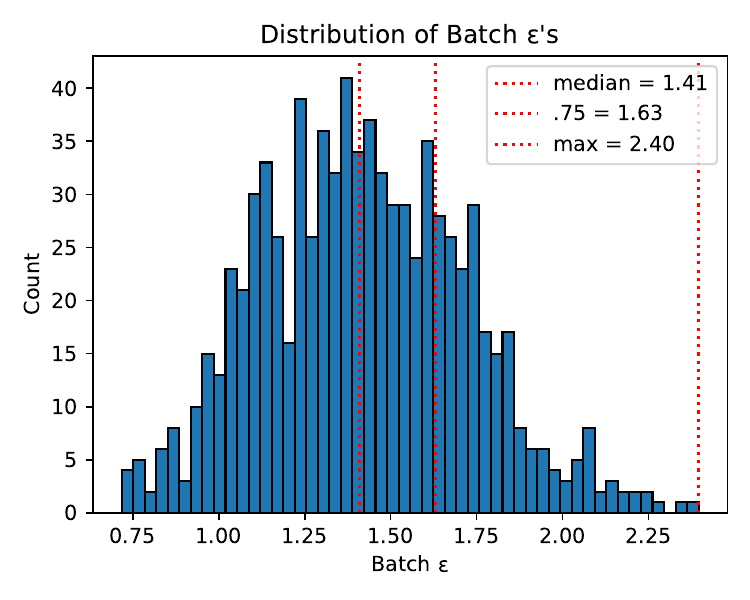} 
  \includegraphics[width=0.45\textwidth, trim=0cm 0.2cm 0cm 0.2cm, clip]{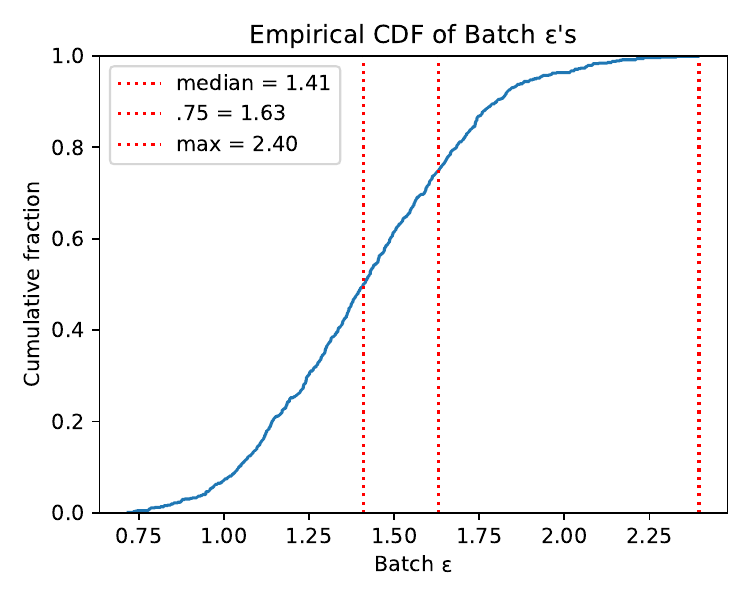} 
  \caption{We plot the distribution of \emph{per-batch $\varepsilon$ costs} of the median mechanism on AGNews. The maximum over all batches obtains $\varepsilon=2.40$, which is the privacy guarantee we report in Table \ref{table:results} via Theorem \ref{thm:main}. Most batches have substantially smaller privacy cost.}
  \label{fig:agnews-per-batch-epsilon}
\end{figure*}

\begin{figure*}[h]
  \centering
  \includegraphics[width=0.45\textwidth, trim=0cm 0.2cm 0.0cm 0.2cm, clip]{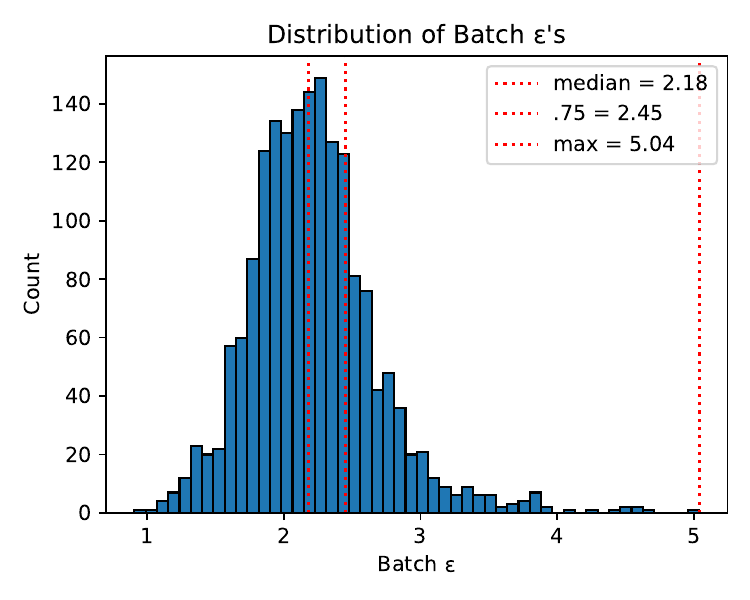} 
  \includegraphics[width=0.45\textwidth, trim=0cm 0.2cm 0cm 0.2cm, clip]{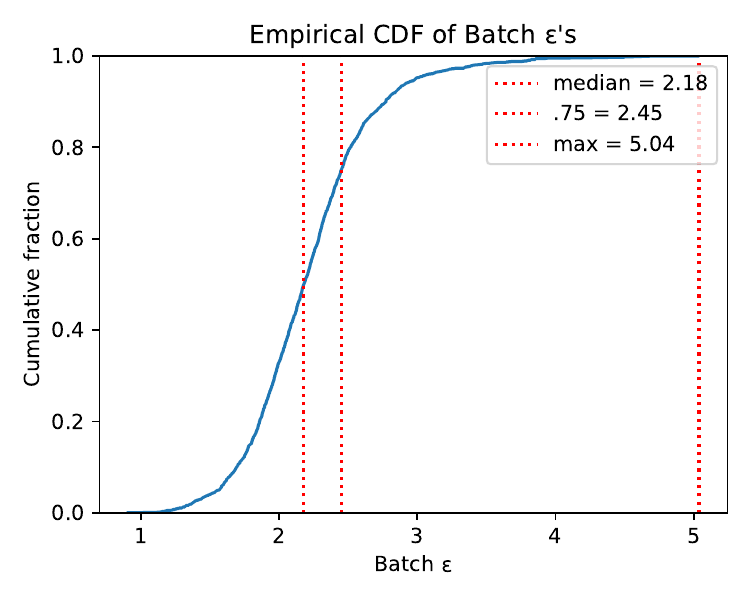} 
  \caption{We plot the distribution of \emph{per-batch $\varepsilon$ costs} of the median mechanism on NYT Topic. The maximum over all batches obtains $\varepsilon=5.04$, which is the privacy guarantee we report in Table \ref{table:results} via Theorem \ref{thm:main}. \emph{In this case, our accounting suffers particularly from the long right tail of per-batch privacy costs}.}
  \label{fig:nyt-topic-per-batch-epsilon}
\end{figure*}

\begin{figure*}[h]
  \centering
  \includegraphics[width=0.65\textwidth, trim=0cm 0.2cm 0.0cm 0.2cm, clip]{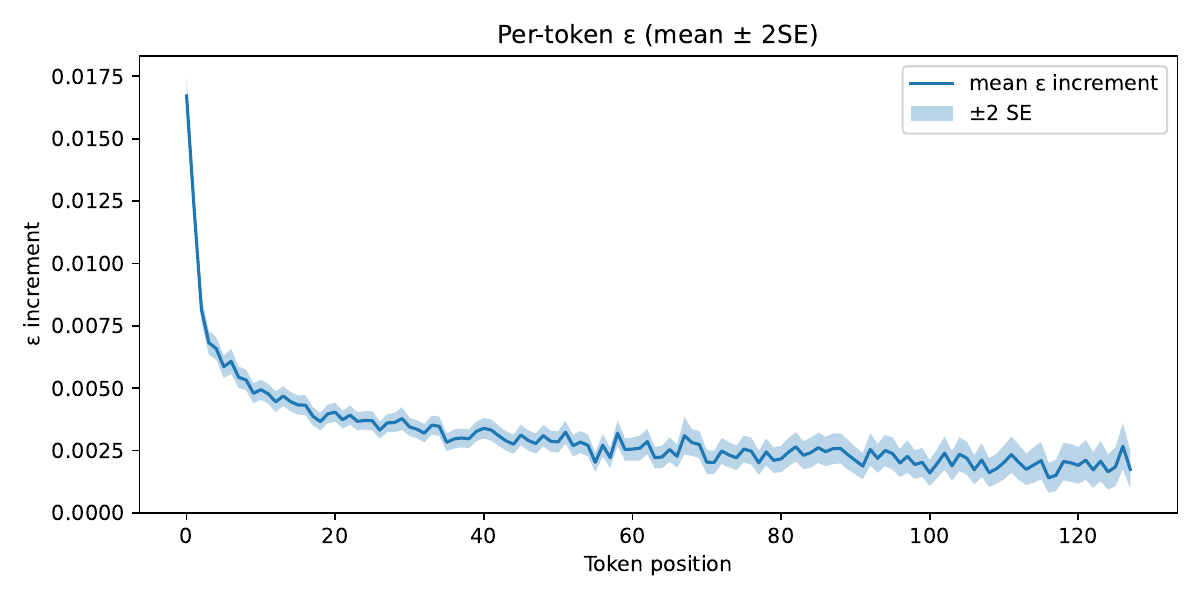} 
  \caption{We plot the average \emph{per-token $\varepsilon$ costs} of the median mechanism on AGNews ($\varepsilon=2.40$). Consensus builds throughout generation, decreasing the privacy cost.}
  \label{fig:agnews-per-token-epsilon}
\end{figure*}

\begin{figure*}[h]
  \centering
  \includegraphics[width=0.65\textwidth, trim=0cm 0.2cm 0.0cm 0.2cm, clip]{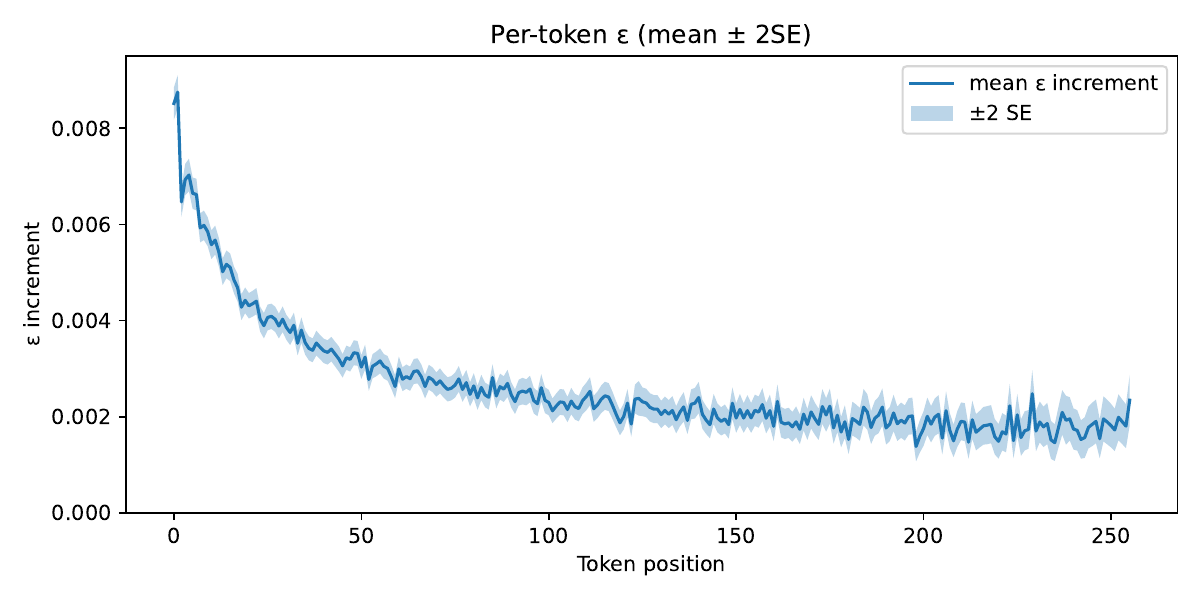} 
  \caption{We plot the average \emph{per-token $\varepsilon$ costs} of the median mechanism on Yelp ($\varepsilon=2.21$).}
  \label{fig:yelp-per-token-epsilon}
\end{figure*}

\newpage
\subsection{Evaluation hyperparameters}

\paragraph{MAUVE.} The absolute value of MAUVE scores can vary due to the precise implementation details, however the relative rankings it assigns to datasets is robust \citep{pillutla2021mauve}. We follow the original implementation\footnote{See \url{https://github.com/krishnap25/mauve/blob/main/src/mauve/compute_mauve.py}} closely, as well as report  all hyperparameters used: 768-dim Gecko embeddings \citep{lee2024gecko}, $n=1000$ texts per set, $n/10 = 100$ clusters (as recommended), $k$-means iteration limit of 500, 5 $k$-means initializations, PCA target explained variance of 0.9, MAUVE scaling factor of 5, and 32 MAUVE divergence curve discretization points.

\paragraph{BERT.} We compute BERT accuracy on a synthetic dataset by first partitioning it into a synthetic validation and synthetic train component, then running a hyperparameter sweep for BERT training, and finally selecting the checkpoint with best synthetic validation accuracy, and then finally reporting the accuracy of the selected checkpoint on real held-out data. 

The fraction of validation data is 0.1. We fix a batch size of 200, and train for roughly 500 steps by setting \texttt{epochs = math.ceil((batch\_size * steps)/train\_set\_size)}. We use Adam, and search over 5 learning rates [1e-6, 3e-6, 1e-5, 3e-5, 1e-4] $\times$ 2 settings for weight decay [0.0, 5e-4]. In each run, we employ early stopping: stopping after 4 epochs with no improvement in synthetic validation accuracy and returning the best checkpoint so far, in terms of synthetic validation accuracy.

\subsection{Method hyperparameters}

\begin{table}[H]
\small
    \centering
    \begin{tabular}{l l r r r r r }
        \toprule
        \multicolumn{1}{l}{\textbf{Setting}} & 
        \multicolumn{1}{l}{\textbf{Model}} &
        \multicolumn{1}{c}{\textbf{Examples $k$}} &
        \multicolumn{1}{c}{\textbf{Batch size}} &
        \multicolumn{1}{c}{\textbf{Output tokens}} &
        \multicolumn{1}{c}{\textbf{Temp.}} &
        \multicolumn{1}{c}{\textbf{Clip $c$}} \\   
        \midrule
        Baseline & IT & 1 & 64 & 1000 & 1.5 & 9 \\
        \midrule
        Baseline++ & PT & 2 & 64 & 1000 & 1.5 & 9 \\
        \hspace{0.5em} + \emph{non-private clustering} & PT & 2 & 64 & 1000 & 1.5 & 9 \\
        \bottomrule
    \end{tabular}
    \vspace{5pt}
    \caption{Hyperparameters for Yelp100k at $\varepsilon=\infty$ results presented in Table \ref{tab:clustering-demo}. $k$ refers to the number of examples in each context; $c$ is the clipping parameter in Equation \ref{eq:aggregate}.}
    \label{tab:clustering-demo-hyperparams_yelp}
\end{table}

\begin{table}[H]
\small
    \centering
    \begin{tabular}{l l r r r r r r}
        \toprule
        \multicolumn{1}{l}{\textbf{Setting}} &
        \multicolumn{1}{l}{$\boldsymbol{\varepsilon}$} & 
        \multicolumn{1}{l}{\textbf{Model}} &
        \multicolumn{1}{c}{\textbf{Examples $k$}} &
        \multicolumn{1}{c}{\textbf{Batch size}} &
        \multicolumn{1}{c}{\textbf{Output tokens}} &
        \multicolumn{1}{c}{\textbf{Temp.}} &
        \multicolumn{1}{c}{\textbf{Clip $c$}} \\   
        \midrule
        Mean Baseline & 10 & IT & 1 & 64 & 373/337/355 & 1.5 & 9 \\
        \midrule
        Mean Baseline++ & 10 & PT & 2 & 64 & 373/337/355 & 1.5 & 9 \\
        Mean Clustered & 9.9 & PT & 2 & 64 & 367/331/349 & 1.5 & 9 \\
        Median Clustered & - & PT & 2 & 64 & 367/331/349 & 1.5 & 6 \\
        \midrule
        Mean Baseline & 3 & IT & 1 & 256 & 733/642/686 & 1.5 & 9 \\
        \midrule
        Mean Baseline++ & 3 & PT & 2 & 256 & 733/642/686 & 1.5 & 9 \\
        Mean Clustered  & 2.9 & PT & 2 & 256 & 689/604/645 & 1.5 & 9 \\
        Median Clustered & - & PT & 2 & 256 & 689/604/645 & 1.5 & 6 \\
        \bottomrule
    \end{tabular}
    \vspace{5pt}
    \caption{Hyperparameters for $\varepsilon=3$ and $\varepsilon=10$ results presented in Table \ref{table:results}. The same hyperparameters are used across all datasets; except per-batch \emph{Output tokens} which depends on input dataset size to target the same $\varepsilon$; we report results for \emph{AGNews/Yelp/NYT Topic} respectively. $k$ refers to the number of examples in each context; $c$ is the clipping parameter in Equation. \ref{eq:aggregate}.}
    \label{tab:clustering-demo-hyperparams}
\end{table}

\subsection{Datasets and models}

\begin{table}[H]
\small
\centering
    \subfloat[Overview of datasets used. For synthesis targets, $n_\text{train}$ is 10\% smaller than reported elsewhere as we split off that amount to use for validation.]{
    \small
    \centering
    \scalebox{0.9}{
    \begin{tabular}{lrlll}
    \toprule
    Dataset & $n_\text{train}$ & Description & Usage & Source\\
    \midrule
    DBPedia & 560,000 & 14-category Wikipedia article topic & Public clusters & \citep{zhang2015character}\tablefootnote{\url{https://huggingface.co/datasets/fancyzhx/dbpedia_14}} \\
    \midrule
    AGNews & 108,000 & 4-way news topic classification & Synthesis target & \citep{zhang2015character}\tablefootnote{\url{https://huggingface.co/datasets/fancyzhx/ag_news}}\\
    Yelp Polarity & 504,000 & 2-way review sentiment classification & Synthesis target &\citep{zhang2015character}\tablefootnote{\url{https://huggingface.co/datasets/fancyzhx/yelp_polarity}}\\
    NYT Topics & 230,400& 8-way news topic classification & Synthesis target & \citep{singh_nyt_articles}\tablefootnote{\url{https://huggingface.co/datasets/dstefa/New_York_Times_Topics}} \\
    \bottomrule
    \end{tabular}
    }
    }
    \vspace{10pt}
    \\
    \subfloat[Overview of models used in experiments.]{
    \small
    \centering
    \scalebox{0.9}{
    \begin{tabular}{lll}
    \toprule
    Model  & Usage & Source\\
    \midrule
    Gecko & Generation; embeddings for clustering & \citep{lee2024gecko} \\
    Gemma 2 2B IT & \multirow{2}{*}{Generation; DP Inference}  & \citep{gemmateam2024gemma2}\tablefootnote{\url{https://huggingface.co/google/gemma-2-2b-it}}\\
    \multirow{1}{*}{Gemma 2 2B PT} & & \multirow{1}{*}{\citep{gemmateam2024gemma2}\tablefootnote{\url{https://huggingface.co/google/gemma-2-2b}}}\\
    \cmidrule{2-3}
    BERT-Base 12/768 110M & Evaluation; downstream finetuning & \citep{turc2019}\tablefootnote{\url{https://huggingface.co/google/bert_uncased_L-10_H-256_A-4}}\\
    Gecko & Evaluation; embeddings for MAUVE & \citep{lee2024gecko} \\
    \bottomrule
    \end{tabular}
    }
    }
    \vspace{10pt}
    \caption{Datasets and models used in our experiments. The Gecko embedding model is used for clustering, as well as for computing MAUVE. Gemma 2 2B IT/PT are used for synthetic data generation. We finetune BERT using synthetic data to evaluate how useful synthetic data is for improving accuracy on real data.}
\end{table}

\section{Prompts}
\label{sec:prompts}

PT and IT Gemma variants necessitate changes prompt changes. We use the same templates across all datasets. We stop generation when the model outputs its respective end token: "\texttt{\`{}\`{}\`{}}" for PT, "\texttt{<end\_of\_turn>}" for IT.

For clarity of exposition, we show the prompts when we use two examples per context, but the same template is generalizes to $k$ prompts per context (including $k=1$ used in our experiments). The field \texttt{label} is in natural language (e.g. \texttt{Positive} or \texttt{Negative} for Yelp Polarity); and recall that batches are constructed so that that all examples in a batch share a label.

\subsection{PT prompt template}

\begin{tcolorbox}[colback=lightgray,colframe=black,arc=4mm,boxrule=1pt,breakable]
\begin{verbatim}
```
{label}
{example1}
```

```
{label}
{example2}
```

```
\end{verbatim}
\end{tcolorbox}

\subsection{IT prompt template}

\begin{tcolorbox}[colback=lightgray,colframe=black,arc=4mm,boxrule=1pt,breakable]
\begin{verbatim}
<start_of_turn>user
Here are texts with Label: {label}.

Text: {example1}

Text: {example2}

Please give another one. No formatting or explanations.<end_of_turn>
<start_of_turn>model
Text: 
\end{verbatim}
\end{tcolorbox}

\end{document}